\documentclass{article} 
\usepackage[final,nonatbib]{nips_2016}
\usepackage{times}
\usepackage{url}
\usepackage{subcaption}
\usepackage{graphicx} 
\usepackage{multirow}

\usepackage{amsmath,amssymb,amsthm,array,bm,xspace}
\usepackage{bibspacing}
\usepackage{stackengine}
\usepackage{scalerel}
\usepackage{algpseudocode}
\usepackage{algorithmicx}
\usepackage{algorithm}

\newcommand{\bbR}{\mathbb{R}}
\newcommand{\bbN}{\mathbb{N}}
\newcommand{\bb}{{\bm b}}
\newcommand{\bd}{{\bm d}}

\newcommand{\bh}{{\bm h}}
\newcommand{\bv}{{\bm v}}
\newcommand{\bx}{{\bm x}}
\newcommand{\by}{{\bm y}}
\newcommand{\bz}{{\bm z}}
\newcommand{\bone}{{\bm 1}}
\newcommand{\diag}{\mathrm{diag}}

\newcommand{\caP}{\mathcal{P}}
\newcommand{\caQ}{\mathcal{Q}}

\newcommand{\set}[1]{\{#1\}}
\newcommand{\rmd}{\mathrm{d}}

\newcommand{\bracket}[2]{\langle #1,#2\rangle}
\newcommand{\E}{\mathbf{E}}
\newcommand{\Var}{\mathop{\mathbf{Var}}}

\newcommand{\Lovasz}{Lov{\'a}sz\xspace}

\allowdisplaybreaks

\newtheorem{theorem}{Theorem}[section]
\newtheorem{lemma}[theorem]{Lemma}

\newtheorem{corollary}[theorem]{Corollary}

\title{Minimizing Quadratic Functions in Constant Time}
\author{Kohei Hayashi\\
  National Institute of Advanced Industrial Science and Technology\\
  \texttt{hayashi.kohei@gmail.com}
  \And
  Yuichi Yoshida\\
  National Institute of Informatics \emph{and} Preferred Infrastructure, Inc.\\
  \texttt{yyoshida@nii.ac.jp}}

\begin{document}
\maketitle

\begin{abstract}
  A sampling-based optimization method for quadratic functions is
  proposed. Our method approximately solves the following
  $n$-dimensional quadratic minimization problem in constant time,
  which is independent of $n$:
  $z^*=\min_{\bv \in \bbR^n}\bracket{\bv}{A \bv} +
  n\bracket{\bv}{\diag(\bd)\bv} + n\bracket{\bb}{\bv}$,
  where $A \in \bbR^{n \times n}$ is a matrix and $\bd,\bb \in \bbR^n$
  are vectors. Our theoretical analysis specifies the number of
  samples $k(\delta, \epsilon)$ such that the approximated solution
  $z$ satisfies $|z - z^*| = O(\epsilon n^2)$ with probability
  $1-\delta$. The empirical performance (accuracy and runtime) is
  positively confirmed by numerical experiments.
\end{abstract}


\section{Introduction}

A quadratic function is one of the most important function classes in
machine learning, statistics, and data mining. Many fundamental
problems such as linear regression, $k$-means clustering, principal
component analysis, support vector machines, and kernel
methods~\cite{Murphy:2012} can be formulated as a minimization problem
of a quadratic function.

In some applications, it is sufficient to compute the minimum value of
a quadratic function rather than its solution. For example,
Yamada~\emph{et~al.}~\cite{Yamada:2011} proposed an efficient method
for estimating the Pearson divergence, which provides useful
information about data, such as the density
ratio~\cite{Sugiyama:2012}. They formulated the estimation problem as
the minimization of a squared loss and showed that the Pearson
divergence can be estimated from the minimum value. The least-squares
mutual information~\cite{Suzuki:2011} is another example that can be
computed in a similar manner.


Despite its importance, the minimization of a quadratic function has
the issue of scalability. Let $n\in\bbN$ be the number of variables
(the ``dimension'' of the problem). In general, such a minimization
problem can be solved by quadratic programming (QP), which requires
$\mathrm{poly}(n)$ time. If the problem is convex and there are no
constraints, then the problem is reduced to solving a system of linear
equations, which requires $O(n^3)$ time. Both methods easily become
infeasible, even for medium-scale problems, say, $n>10000$.

Although several techniques have been proposed to accelerate quadratic
function minimization, they require at least linear time in $n$. This
is problematic when handling problems with an ultrahigh dimension, for
which even linear time is slow or prohibitive.
For example, stochastic gradient descent (SGD) is an optimization method
that is widely used for large-scale problems. A nice property of this method is that,
if the objective function is strongly convex, it outputs a point that is sufficiently close to an optimal solution after a constant
number of iterations~\cite{Bottou:2004}. Nevertheless, in each
iteration, we need at least $O(n)$ time to access the variables.
%
Another technique is low-rank approximation such as
Nystr\"{o}m's method~\cite{Williams:2001}.
The underlying idea is the approximation of the
problem by using a low-rank matrix, and by doing so, we can drastically reduce the time complexity. However, we still need to compute the
matrix--vector product of size $n$, which requires $O(n)$ time.
Clarkson~\emph{et~al.}~\cite{Clarkson:2012} proposed sublinear-time
algorithms for special cases of quadratic function
minimization. However, it is ``sublinear'' with respect to the number
of pairwise interactions of the variables, which is $O(n^2)$, and
their algorithms require $O(n \log^c n)$ time for some $c \geq 1$.

\paragraph{Our contributions:}

Let $A \in \bbR^{n \times n}$ be a matrix and $\bd,\bb \in \bbR^n$
be vectors.  Then, we consider the following quadratic problem:
\begin{align}
  \label{eq:the-problem}
  \operatornamewithlimits{minimize}_{\bv \in \bbR^n}~ & p_{n,A,\bd,\bb}(\bv),~\text{where}~p_{n,A,\bd,\bb}(\bv) = \bracket{\bv}{A \bv} + n\bracket{\bv}{\diag(\bd)\bv} + n\bracket{\bb}{\bv}.
\end{align}
Here, $\bracket{\cdot}{\cdot}$ denotes the inner product and $\diag(\bd)$ denotes the matrix whose diagonal entries are specified
by $\bd$.  Note that a constant term can be included
in~\eqref{eq:the-problem}; however, it is irrelevant when
optimizing~\eqref{eq:the-problem}, and hence we ignore it.

Let $z^* \in \bbR$ be the optimal value of~\eqref{eq:the-problem} and
let $\epsilon, \delta \in (0,1)$ be parameters.  Then, the main goal
of this paper is the computation of $z$ with
$|z - z^*| = O(\epsilon n^2)$ with probability at least $1-\delta$ in
\emph{constant time}, that is, independent of $n$.  Here, we assume
the real RAM model~\cite{Brattka:1998cv}, in which we can perform
basic algebraic operations on real numbers in one step.  Moreover, we
assume that we have query accesses to $A$, $\bb$, and $\bd$, with
which we can obtain an entry of them by specifying an index.  We note
that $\bz^*$ is typically $\Theta(n^2)$ because
$\langle \bv, A\bv\rangle$ consists of $\Theta(n^2)$ terms, and
$\langle \bv, \diag(\bd)\bv\rangle$ and $\langle \bb,\bv\rangle$
consist of $\Theta(n)$ terms.  Hence, we can regard the error of
$\Theta(\epsilon n^2)$ as an error of $\Theta(\epsilon)$ for each
term, which is reasonably small in typical situations.
%

Let $\cdot|_S$ be an operator that extracts a submatrix (or subvector)
specified by an index set $S\subset\bbN$; then, our algorithm is
defined as follows, where the parameter $k := k(\epsilon,\delta)$ will
be determined later.
\begin{algorithm}[H]
  \caption{}\label{alg:quadratic-form}
  \begin{algorithmic}[1]
    \Require{An integer $n \in \bbN$, query accesses to the matrix
      $A \in \bbR^{n \times n}$ and to the vectors
      $\bd,\bb \in \bbR^n$, and $\epsilon,\delta > 0$}
    \State{$S \leftarrow$ a sequence of $k = k(\epsilon,\delta)$
      indices independently and uniformly sampled from
      $\{1,2,\dots,n\}$}.  \State{\Return
      $\frac{n^2}{k^2}\min_{\bv \in
        \bbR^n}p_{k,A|_S,\bd|_S,\bb|_S}(\bv)$.}
  \end{algorithmic}
\end{algorithm}


In other words, we sample a constant number of indices from the set
$\set{1,2,\ldots,n}$, and then solve the
problem~\eqref{eq:the-problem} restricted to these indices.  Note that
the number of queries and the time complexity are $O(k^2)$ and
$\mathrm{poly}(k)$, respectively.  In order to analyze the difference
between the optimal values of $p_{n,A,\bd,\bb}$ and
$p_{k,A|_S,\bd|_S,\bb|_S}$, we want to measure the ``distances''
between $A$ and $A|_S$, $\bd$ and $\bd|_S$, and $\bb$ and $\bb|_S$,
and want to show them small.  To this end, we exploit graph limit
theory, initiated by \Lovasz and Szegedy~\cite{Lovasz:2006jj} (refer
to~\cite{Lovasz:2012wn} for a book), in which we measure the distance
between two graphs on different number of vertices by considering
continuous versions.  Although the primary interest of graph limit
theory is graphs, we can extend the argument to analyze matrices and
vectors.


Using synthetic and real settings, we demonstrate that our method is
orders of magnitude faster than standard polynomial-time algorithms
and that the accuracy of our method is sufficiently high.

\paragraph{Related work:}

Several constant-time approximation algorithms are known for combinatorial optimization problems such as the max cut problem on dense graphs~\cite{Frieze:1996er,Mathieu:2008vs}, constraint satisfaction problems~\cite{Alon:2002be,Yoshida:2011da}, and the vertex cover problem~\cite{Nguyen:2008fr,Onak:2012cl,Yoshida:2012jv}.
However, as far as we know, no such algorithm is known for continuous optimization problems.

A related notion is property
testing~\cite{Goldreich:1998wa,Rubinfeld:1996um}, which aims to design
constant-time algorithms that distinguish inputs satisfying some
predetermined property from inputs that are ``far'' from satisfying
it.  Characterizations of constant-time testable properties are known
for the properties of a dense graph~\cite{Alon:2009gn,Borgs:2006el}
and the affine-invariant properties of a function on a finite
field~\cite{Yoshida:2014tq,Yoshida:2016zz}.

\paragraph{Organization}
In Section~\ref{sec:pre}, we introduce the basic notions from graph
limit theory.  In Section~\ref{sec:cut-norm}, we show that we can
obtain a good approximation to (a continuous version of) a matrix by
sampling a constant-size submatrix in the sense that the optimizations
over the original matrix and the submatrix are essentially equivalent.
Using this fact, we prove the correctness of
Algorithm~\ref{alg:quadratic-form} in Section~\ref{sec:algorithm}.  We
show our experimental results in Section~\ref{sec:experiments}.




\section{Preliminaries}\label{sec:pre}

For an integer $n$, let $[n]$ denote the set $\set{1,2,\ldots,n}$.
The notation $a = b \pm c$ means that $b - c \leq a \leq b + c$.
In this paper, we only consider functions and sets that are measurable.


Let $S = (x_1,\ldots,x_k)$ be a sequence of $k$ indices in $[n]$.  For
a vector $\bv \in \bbR^n$, we denote the \emph{restriction} of $\bv$
to $S$ by $\bv|_S \in \bbR^k$; that is, $(\bv|_S)_i = v_{x_i}$ for
every $i \in [k]$.  For the matrix $A \in \bbR^{n \times n}$, we
denote the \emph{restriction} of $A$ to $S$ by
$A|_S \in \bbR^{k \times k}$; that is, $(A|_S)_{ij} = A_{x_ix_j}$ for
every $i,j \in [k]$.

\subsection{Dikernels}
Following~\cite{Lovasz:2013bv}, we call a (measurable) function $f:[0,1]^2 \to \bbR$ a \emph{dikernel}.
A dikernel is a generalization of a \emph{graphon}~\cite{Lovasz:2006jj}, which is symmetric and whose range is bounded in $[0,1]$.
We can regard a dikernel as a matrix whose index is specified by a real value in $[0,1]$.
We stress that the term dikernel has nothing to do with kernel methods.

For two functions $f,g:[0,1]\to \bbR$, we define their inner product as $\langle f,g \rangle = \int_0^1 f(x)g(x)\rmd x$.
For a dikernel $W:[0,1]^2 \to \bbR$ and a function $f:[0,1] \to \bbR$, we define a function $Wf:[0,1] \to \bbR$ as $(Wf)(x) = \bracket{W(x,\cdot)}{f}$.

Let $W:[0,1]^2 \to \bbR$ be a dikernel.  The \emph{$L_p$ norm}
$\|W\|_p$ for $p \geq 1$ and the \emph{cut norm} $\|W\|_\square$ of
$W$ are defined as $\|W\|_p = \Bigl(\int_{0}^1\int_0^1 |W(x,y)|^p \rmd x \rmd y\Bigr)^{1/p}$ and $\|W\|_\square = \sup_{S,T\subseteq [0,1]}\Bigl|\int_S \int_T W(x,y) \rmd x \rmd y \Bigr|$, respectively, where the supremum is over all pairs of subsets.
We note that these norms satisfy the triangle inequalities and $\|W\|_\square \leq \|W\|_1$.

Let $\lambda$ be a Lebesgue measure.  A map $\pi : [0, 1] \to [0, 1]$
is said to be \emph{measure-preserving}, if the pre-image
$\pi^{-1}(X)$ is measurable for every measurable set $X$, and
$\lambda(\pi^{-1}(X)) = \lambda(X)$.  A \emph{measure-preserving
  bijection} is a measure-preserving map whose inverse map exists and
is also measurable (and then also measure-preserving).  For a measure
preserving bijection $\pi:[0,1] \to [0,1]$ and a dikernel
$W:[0,1]^2 \to \bbR$, we define the dikernel $\pi(W):[0,1]^2 \to \bbR$
as $\pi(W)(x,y) = W(\pi(x),\pi(y))$.




\subsection{Matrices and Dikernels}
Let $W:[0,1]^2 \to \bbR$ be a dikernel and $S = (x_1,\ldots,x_{k})$ be a sequence of elements in $[0,1]$.
Then, we define the matrix $W|_S \in \bbR^{k \times k}$ so that $(W|_S)_{ij} = W(x_i,x_j)$.

We can construct the dikernel $\widehat{A}:[0,1]^2 \to \bbR$ from the
matrix $A \in \bbR^{n \times n}$ as follows.  Let
$I_1 = [0,\frac{1}{n}], I_2 = (\frac{1}{n},\frac{2}{n}], \ldots, I_n =
(\frac{n-1}{n},\ldots,1]$.
For $x \in [0,1]$, we define $i_n(x) \in [n]$ as a unique integer
such that $x \in I_i$.  Then, we define $\widehat{A}(x,y) = A_{i_n(x)i_n(y)}$.
The main motivation for creating a dikernel from a matrix is that, by
doing so, we can define the distance between two matrices $A$ and $B$
of different sizes via the cut norm, that is,
$\|\widehat{A} - \widehat{B}\|_\square$.

We note that the distribution of $A|_S$, where $S$ is a sequence of $k$ indices that are uniformly and independently sampled from $[n]$ exactly matches the distribution of $\widehat{A}|_S$, where $S$ is a sequence of $k$ elements that are uniformly and independently sampled from $[0,1]$.








\section{Sampling Theorem and the Properties of the Cut Norm}\label{sec:cut-norm}
In this section, we prove the following theorem, which states that,
given a sequence of dikernels $W^1,\ldots,W^T:[0,1]^2 \to [-L,L]$, we
can obtain a good approximation to them by sampling a sequence of
a small number of elements in $[0,1]$.  Formally, we prove the
following:
\begin{theorem}\label{the:sampling}
  Let $W^1,\ldots,W^T: [0,1]^2 \to [-L,L]$ be dikernels.
  Let $S$ be a sequence of $k$ elements uniformly and independently sampled from $[0,1]$.
  Then, with a probability of at least $1 - \exp(-\Omega(k T/ \log_2k))$,
  there exists a measure-preserving bijection $\pi:[0,1] \to [0,1]$ such that, for any functions $f,g:[0,1]\to [-K,K]$ and $t \in [T]$, we have
  \[
    |\bracket{f}{W^t g} - \bracket{f}{\pi(\widehat{W^t|_S}) g}| = O\Bigl(LK^2\sqrt{T/\log_2 k}\Bigr).
  \]
\end{theorem}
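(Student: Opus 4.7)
The plan is to reduce the statement to a purely cut-norm estimate and then invoke a joint weak-regularity decomposition combined with Chernoff bounds. Using the layer-cake decomposition of $f$ and $g$ (writing each as a signed integral of indicators of level sets), one obtains $|\bracket{f}{Ug}| \leq c K^2 \|U\|_\square$ for any dikernel $U$ and an absolute constant $c$. Applying this to $U = W^t - \pi(\widehat{W^t|_S})$, the theorem reduces to producing, with the stated probability, a single measure-preserving bijection $\pi$ satisfying
\[
\|W^t - \pi(\widehat{W^t|_S})\|_\square = O\bigl(L\sqrt{T/\log_2 k}\bigr)
\]
simultaneously for every $t \in [T]$.

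Next I would invoke a multi-dikernel version of the Frieze--Kannan weak regularity lemma, producing a single partition $P = \{P_1,\ldots,P_m\}$ of $[0,1]$ into $m = 2^{O(T/\epsilon^2)}$ parts and step-function approximations $W^t_P$ (constant on each $P_i \times P_j$) with $\|W^t - W^t_P\|_\square \leq \epsilon L$ for every $t$. Choosing $\epsilon = \Theta(\sqrt{T/\log_2 k})$ forces $\log_2 m = O(\log_2 k)$, so $m = k^{O(1)}$. I would then use Chernoff to show that the empirical sizes $|S\cap P_j|/k$ concentrate around $\lambda(P_j)$; a union bound over the $m$ parts yields failure probability at most $m\exp(-\Omega(k/m))$, which, with the chosen parameters, becomes $\exp(-\Omega(kT/\log_2 k))$, matching the theorem's probability bound.

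On the good event, construct $\pi$ that, for each $j$, maps the union of the intervals $I_i$ associated with samples $x_i\in P_j$ bijectively onto a subset of $P_j$ of equal measure. With this choice, $\pi(\widehat{W^t_P|_S})$ agrees with $W^t_P$ outside a set of small total measure, and so differs from it only by a dikernel of small cut norm. Writing
\[
W^t - \pi(\widehat{W^t|_S}) = (W^t - W^t_P) + \bigl(W^t_P - \pi(\widehat{W^t_P|_S})\bigr) + \pi\bigl(\widehat{(W^t_P - W^t)|_S}\bigr)
\]
and applying the triangle inequality together with the invariance of the cut norm under $\pi$, the first two terms are $O(\epsilon L)$ by the regularity approximation and by the construction of $\pi$, while the last term equals $\|\widehat{(W^t - W^t_P)|_S}\|_\square$, which is controlled by a standard single-dikernel sampling estimate on the same concentration event and is again $O(\epsilon L)$.

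The main difficulty lies in the careful quantitative coupling of the three competing parameters: the number $m$ of parts produced by regularity, the cut-norm error $\epsilon L$, and the sample size $k$. The bound $m \leq 2^{O(T/\epsilon^2)}$ must be balanced against the Chernoff concentration $\exp(-\Omega(k/m))$ so that the final error is exactly $O(L\sqrt{T/\log_2 k})$ while the failure probability is $\exp(-\Omega(kT/\log_2 k))$. A secondary but essential subtlety is that a \emph{single} bijection $\pi$ must work simultaneously for all $T$ dikernels, which is precisely why the joint regularity decomposition on one common partition is unavoidable: choosing a different $\pi$ for each $t$ would break the uniformity required by the statement, and it is also the reason the factor $\sqrt{T}$ (rather than $\sqrt{\log T}$) appears in the error bound.
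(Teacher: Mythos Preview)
Your overall architecture matches the paper's: reduce to the cut norm via a layer-cake argument (the paper's Lemma~3.2), apply a multi-dikernel weak regularity lemma to obtain one common partition (the paper's Corollary~3.4), split $W^t - \pi(\widehat{W^t|_S})$ by the triangle inequality into the same three pieces, and build $\pi$ from the empirical part counts. So the skeleton is right.

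The gap is in your concentration step. You propose Chernoff on each count $|S\cap P_j|$ followed by a union bound over the $m$ parts, and assert that $m\exp(-\Omega(k/m))$ ``becomes $\exp(-\Omega(kT/\log_2 k))$''. It does not. With your choice $\epsilon=\Theta(\sqrt{T/\log_2 k})$ the regularity lemma gives $m=2^{\Theta(T/\epsilon^2)}=k^{\Theta(1)}$, so $k/m=k^{1-\Theta(1)}$. Even after tuning the hidden constants to force $m=k^{1/4}$ (as the paper does), you obtain $m\exp(-\Omega(k/m))=\exp(-\Omega(k^{3/4}))$, which is strictly weaker than $\exp(-\Omega(kT/\log_2 k))$. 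No choice of $\epsilon$ rescues this: making $k/m$ comparable to $k/\log_2 k$ forces $m=O(\log_2 k)$, hence $\epsilon=\Omega(\sqrt{T/\log_2\log_2 k})$, which destroys the error bound. Relatedly, your treatment of the third piece $\|\widehat{(W^t-W^t_P)|_S}\|_\square$ appeals to a ``standard single-dikernel sampling estimate on the same concentration event'', but that standard estimate (Lemma~3.5 in the paper, from Borgs et al.) is an \emph{expectation} bound and says nothing on your Chernoff event for the part counts.

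The paper avoids both problems by bounding everything in expectation first: it controls $\E_S\|W^t-\pi(\widehat{W^t|_S})\|_\square$ term by term, using $\Var Z_i<k/m$ together with Cauchy--Schwarz for the middle piece and the Borgs et~al.\ expectation lemma for the third, obtaining $O(L\sqrt{T/\log_2 k})$. Only then does it apply Azuma's inequality to the \emph{whole} random variable $\|W^t-\pi(\widehat{W^t|_S})\|_\square$ as a function of $S$, using that changing a single sample perturbs it by $O(L/k)$; with $\lambda=k\sqrt{T/\log_2 k}$ this yields exactly $\exp(-\Omega(kT/\log_2 k))$, and a union bound over $t\in[T]$ finishes. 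Replacing your per-part Chernoff argument with this expectation-plus-Azuma step fixes the proof.
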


We start with the following lemma, which states that, if a dikernel $W:[0,1]^2 \to \bbR$ has a small cut norm, then $\bracket{f}{Wf}$ is negligible no matter what $f$ is.
Hence, we can focus on the cut norm when proving Theorem~\ref{the:sampling}.
\begin{lemma}\label{lem:small-cut-norm->negligible}
  Let $\epsilon \geq 0$ and $W:[0,1]^2\to \bbR$ be a dikernel with $\|W\|_\square \leq \epsilon$.
  Then, for any functions $f,g:[0,1] \to [-K,K]$, we have $|\bracket{f}{Wg}| \leq \epsilon K^2$.
\end{lemma}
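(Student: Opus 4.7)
The plan is to reduce $\langle f, Wg\rangle$ to a superposition of the ``rectangle integrals'' $\int_S\int_T W(x,y)\,\rmd x\,\rmd y$ that appear in the definition of the cut norm, since each such rectangle integral is directly bounded by $\|W\|_\square \leq \epsilon$. This is the standard mechanism by which the cut norm controls bilinear forms against bounded test functions.

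First, I would decompose $f$ and $g$ via a signed layer-cake representation. For each $s \in [0,K]$, let $S_s^+ = \{x \in [0,1] : f(x) > s\}$ and $S_s^- = \{x : f(x) < -s\}$; these are measurable because $f$ is. Then
\[
  f(x) = \int_0^K \bigl(\mathbf{1}[S_s^+](x) - \mathbf{1}[S_s^-](x)\bigr)\,\rmd s,
\]
and analogously one defines $T_t^\pm \subseteq [0,1]$ for $g$ and obtains the same representation for $g(y)$ over $t \in [0,K]$.

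Next I would substitute both representations into $\langle f, Wg\rangle = \int_0^1\int_0^1 f(x)W(x,y)g(y)\,\rmd x\,\rmd y$ and swap the order of integration using Fubini (legal since $W$ is bounded and everything lives on a finite-measure space), giving
\[
  \langle f, Wg\rangle = \int_0^K\!\!\int_0^K \sum_{\sigma,\tau\in\{+,-\}} \sigma\tau \int_{S_s^\sigma}\!\!\int_{T_t^\tau} W(x,y)\,\rmd x\,\rmd y\,\rmd s\,\rmd t.
\]
For every fixed $(s,t)$ and every sign pattern, the inner double integral is exactly one of the quantities over which the supremum in the definition of $\|W\|_\square$ is taken, so its absolute value is at most $\epsilon$. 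Pulling absolute values inside and integrating over $(s,t) \in [0,K]^2$ of measure $K^2$ then yields a bound of order $\epsilon K^2$ (with a small universal constant from the four sign combinations, which can either be absorbed or removed by folding the $\sigma\tau$ signs into a single signed-rectangle argument that directly sees only $\|W\|_\square$).

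The argument has essentially no obstacle: the only things to check are measurability of the level sets (immediate from measurability of $f,g$) and the Fubini swap (immediate from boundedness of $W$ and the indicators on $[0,1]^2$). The whole point of the lemma is precisely to record this reduction once and for all, so that from now on we only need to control $\|W\|_\square$ rather than all bilinear forms $\langle f, Wg\rangle$.
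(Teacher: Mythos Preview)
Your layer-cake decomposition is the right mechanism and is essentially the same idea the paper uses. Your version with superlevel sets $\{f>s\}$ is in fact cleaner: the paper integrates over the exact level sets $L_\tau(h)=\{x:h(x)=\tau\}$ against Lebesgue measure $\rmd\tau$, a formula that only makes literal sense for simple functions, since for generic measurable $f$ almost every level set has Lebesgue measure zero.

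One point to flag: your parenthetical about ``removing'' the factor $4$ is not correct, and neither is the constant in the lemma as stated. Your argument honestly yields $|\langle f,Wg\rangle|\le 4\epsilon K^2$, and this is sharp. Take $W(x,y)=\mathrm{sgn}(x-\tfrac12)\,\mathrm{sgn}(y-\tfrac12)$; then $\|W\|_\square=\tfrac14$, while with $f(x)=g(x)=\mathrm{sgn}(x-\tfrac12)$ (so $K=1$) one gets $\langle f,Wg\rangle=1=4\,\|W\|_\square K^2$. Hence the constant $1$ in the lemma cannot hold; the paper's computation $\int_{-1}^1\!\int_{-1}^1 |\tau_1||\tau_2|\,\rmd\tau_1\rmd\tau_2=1$ is an artifact of the ill-posed level-set identity. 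Since the lemma is only invoked inside a big-$O$ in Theorem~\ref{the:sampling}, the extra factor $4$ is harmless downstream, but you should state and prove the bound with the correct constant rather than claim the $4$ can be folded away.
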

\begin{proof}
  For $\tau \in \bbR$ and the function $h:[0,1]\to \bbR$, let $L_\tau(h) := \set{ x \in [0,1] \mid h(x) = \tau}$ be the level set of $h$ at $\tau$.
  For $f' = f/K$ and $g' = g/K$, we have
  \begin{align*}
    |\bracket{f}{Wg}|
    & = K^2|\bracket{f'}{Wg'}|
    = K^2\Bigl|\int_{-1}^1\int_{-1}^1 \tau_1 \tau_2 \int_{L_{\tau_1}(f')} \int_{L_{\tau_2}(g')} W(x,y)  \rmd x \rmd y  \rmd \tau_1 \rmd \tau_2\Bigr| \\
    & \leq K^2\int_{-1}^1\int_{-1}^1 |\tau_1| |\tau_2|  \left|\int_{L_{\tau_1}(f')} \int_{L_{\tau_2}(g')} W(x,y) \rmd x \rmd y \right|  \rmd \tau_1 \rmd \tau_2 \\
    & \leq \epsilon K^2 \int_{-1}^1\int_{-1}^1  |\tau_1| |\tau_2| \rmd \tau_1 \rmd \tau_2
    = \epsilon K^2.
    \qedhere
  \end{align*}
\end{proof}


To introduce the next technical tool, we need several definitions.  We
say that the partition $\caQ$ is a \emph{refinement} of the partition
$\caP= (V_1,\ldots,V_p)$ if $\caQ$ is obtained by splitting each set
$V_i$ into one or more parts.  The partition $\caP = (V_1,\ldots,V_p)$
of the interval $[0,1]$ is called an \emph{equipartition} if
$\lambda(V_i) = 1/p$ for every $i \in [p]$.  For the dikernel
$W:[0,1]^2 \to \bbR$ and the equipartition $\caP = (V_1,\ldots, V_p)$
of $[0, 1]$, we define $W_\caP:[0,1]^2 \to \bbR$ as the function
obtained by averaging each $V_i \times V_j$ for $i,j \in [p]$.
More formally, we define
\[
  W_\caP(x,y) = \frac{1}{\lambda(V_i)\lambda(V_j)}\int_{V_i \times V_j}W(x',y')\rmd x' \rmd y' = p^2\int_{V_i \times V_j}W(x',y')\rmd x' \rmd y',
\]
where $i$ and $j$ are unique indices such that $x \in V_i$ and $y \in V_j$, respectively.

The following lemma states that any function $W:[0,1]^2 \to \bbR$ can be well approximated by $W_\caP$ for the equipartition $\caP$ into a small number of parts.
\begin{lemma}[Weak regularity lemma for functions on ${[0,1]}^2$~\cite{Frieze:1996er}]\label{lem:regularity-lemma}
  Let $\caP$ be an equipartition of $[0,1]$ into $k$ sets.
  Then, for any dikernel $W:[0,1]^2 \to \bbR$ and $\epsilon > 0$,
  there exists a refinement $\caQ$ of $\caP$ with  $|\caQ| \leq k2^{C/\epsilon^2}$ for some constant $C > 0$ such that
  \[
    \|W -W_\caQ\|_\square  \leq \epsilon \|W\|_2.
  \]
\end{lemma}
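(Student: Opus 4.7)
The plan is the classical Frieze--Kannan energy increment argument, adapted from graphs to dikernels on $[0,1]^2$. Define the \emph{energy} of a partition $\caR$ of $[0,1]$ to be $\mathcal{E}(\caR) = \|W_\caR\|_2^2$, where $W_\caR$ is the conditional expectation of $W$ on the $\sigma$-algebra generated by $\caR \times \caR$ (i.e.\ the step function averaging $W$ on each $V_i \times V_j$). Two basic facts I would establish first: (i) $W_\caR$ is the $L_2$-orthogonal projection of $W$ onto step functions constant on the blocks of $\caR \times \caR$, so by Pythagoras $\|W\|_2^2 = \|W_\caR\|_2^2 + \|W-W_\caR\|_2^2$, and in particular $\mathcal{E}(\caR) \leq \|W\|_2^2$; (ii) if $\caR'$ refines $\caR$, then $W_\caR$ is a step function on $\caR'$ as well, so by Pythagoras again $\mathcal{E}(\caR') = \mathcal{E}(\caR) + \|W_{\caR'} - W_\caR\|_2^2 \geq \mathcal{E}(\caR)$.

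Starting from $\caQ_0 := \caP$, I iterate the following step. If $\|W - W_{\caQ_t}\|_\square \leq \epsilon \|W\|_2$, we are done. Otherwise, pick sets $S,T \subseteq [0,1]$ (by definition of the supremum in the cut norm, up to an arbitrarily small slack) with $\bigl|\int_S\int_T (W - W_{\caQ_t})\bigr| \geq \epsilon \|W\|_2$, and let $\caQ_{t+1}$ be the common refinement of $\caQ_t$ with the two-part partitions $\{S,S^c\}$ and $\{T,T^c\}$. This multiplies the number of parts by at most $4$.

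The main step---and the one to present carefully---is the energy increment $\mathcal{E}(\caQ_{t+1}) - \mathcal{E}(\caQ_t) \geq \epsilon^2 \|W\|_2^2$. Since $\mathbf{1}_S(x)\mathbf{1}_T(y)$ is a step function on $\caQ_{t+1} \times \caQ_{t+1}$, it is orthogonal to $W - W_{\caQ_{t+1}}$, hence
\[
\bracket{W - W_{\caQ_t}}{\mathbf{1}_S \otimes \mathbf{1}_T} = \bracket{W_{\caQ_{t+1}} - W_{\caQ_t}}{\mathbf{1}_S \otimes \mathbf{1}_T}.
\]
The left side has absolute value at least $\epsilon \|W\|_2$, and $\|\mathbf{1}_S \otimes \mathbf{1}_T\|_2 \leq 1$, so Cauchy--Schwarz gives $\|W_{\caQ_{t+1}} - W_{\caQ_t}\|_2 \geq \epsilon \|W\|_2$. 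Combined with fact (ii) above, this yields the claimed energy jump.

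Since $\mathcal{E}(\caQ_t) \leq \|W\|_2^2$ for all $t$ by fact (i), the process must terminate at some $t \leq 1/\epsilon^2$. At termination, $\caQ := \caQ_t$ is a refinement of $\caP$ with $|\caQ| \leq k \cdot 4^{1/\epsilon^2} = k \cdot 2^{2/\epsilon^2}$, so any $C \geq 2$ works. The only subtlety to watch is that the cut-norm supremum need not be attained; I would absorb this by choosing $S,T$ realizing $(1-\eta)$ of the supremum and letting $\eta \to 0$, which only perturbs the constant $C$ negligibly. No equipartition property of $\caQ$ is needed, since $W_\caQ$ and the cut-norm comparison are defined for arbitrary partitions via the weighted average formula.
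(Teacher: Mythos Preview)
The paper does not supply a proof of this lemma; it is quoted as a known result attributed to Frieze and Kannan, and is then used as a black box in Corollary~3.4 and Lemma~3.6. Your proposal is the standard energy-increment argument and is correct: the orthogonal-projection interpretation of $W_\caR$, the energy monotonicity under refinement, the $\epsilon^2\|W\|_2^2$ jump via Cauchy--Schwarz, and the resulting termination in at most $1/\epsilon^2$ rounds (hence $|\caQ|\le k\cdot 4^{1/\epsilon^2}$) are all in order. Your closing remarks rightly flag the two loose ends---the supremum in $\|\cdot\|_\square$ need not be attained, and $W_\caQ$ must be read via the measure-weighted formula since $\caQ$ need not be an equipartition---and both are harmless here.
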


\begin{corollary}\label{cor:regularity-lemma}
  Let $W^1,\ldots,W^T:[0,1]^2 \to \bbR$ be dikernels.
  Then, for any $\epsilon > 0$, there exists an equipartition $\caP$ into $|\caP| \leq 2^{CT/\epsilon^2}$ parts for some constant $C > 0$ such that, for every $t \in [T]$,
  \[
    \|W^t -W^t_\caP\|_\square  \leq \epsilon \|W^t\|_2.
  \]
\end{corollary}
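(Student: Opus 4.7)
The plan is to apply Lemma~\ref{lem:regularity-lemma} iteratively, building a chain of refinements $\caP_0\subseteq\caP_1\subseteq\cdots\subseteq\caP_T$ in which the $t$-th refinement is tailored to $W^t$. I start with $\caP_0=\{[0,1]\}$ and, at step $t$, feed $W^t$ and $\caP_{t-1}$ into the lemma with error parameter $\epsilon'=\epsilon/C_0$ for an absolute constant $C_0$ determined below. This produces $\caP_t$ refining $\caP_{t-1}$ with $|\caP_t|\le|\caP_{t-1}|\cdot 2^{C/\epsilon'^2}$ and $\|W^t-W^t_{\caP_t}\|_\square\le\epsilon'\|W^t\|_2$. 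Iterating, $\caP:=\caP_T$ has $|\caP|\le 2^{CT/\epsilon'^2}=2^{CC_0^2 T/\epsilon^2}$, which fits the claimed bound after enlarging the universal constant.

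The nontrivial task is to verify the cut-norm estimate for \emph{every} $t$, not just $t=T$. Since $\caP$ is a refinement of every $\caP_t$, this reduces to the following monotonicity-under-refinement claim I would establish as a sub-lemma: there exists an absolute $C_0$ such that, whenever $\caQ$ refines $\caP$ and $W$ is a dikernel, $\|W-W_\caQ\|_\square\le C_0\|W-W_\caP\|_\square$. Granted this, $\|W^t-W^t_\caP\|_\square\le C_0\|W^t-W^t_{\caP_t}\|_\square\le C_0\epsilon'\|W^t\|_2=\epsilon\|W^t\|_2$.

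For the monotonicity claim, the key identity is the tower property $(W_\caQ)_\caP=W_\caP$. Using the self-adjointness of conditional expectation in $L^2([0,1]^2)$, for arbitrary measurable $A,B\subseteq[0,1]$ one obtains $\int_{A\times B}W_\caQ=\int W\cdot(1_A)_\caQ(1_B)_\caQ$ and $\int_{A\times B}W_\caP=\int W\cdot(1_A)_\caP(1_B)_\caP$, and the tower identity forces $\int W_\caP\bigl[(1_A)_\caQ(1_B)_\caQ-(1_A)_\caP(1_B)_\caP\bigr]=0$. Hence $\int_{A\times B}(W_\caQ-W_\caP)=\int(W-W_\caP)\bigl[(1_A)_\caQ(1_B)_\caQ-(1_A)_\caP(1_B)_\caP\bigr]$. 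Splitting the bracket as $\bigl[(1_A)_\caQ-(1_A)_\caP\bigr](1_B)_\caQ+(1_A)_\caP\bigl[(1_B)_\caQ-(1_B)_\caP\bigr]$ and applying the standard estimate $|\int K f g|\le O(1)\|K\|_\square\|f\|_\infty\|g\|_\infty$ for bounded $f,g$ to each summand bounds the whole difference by $O(1)\cdot\|W-W_\caP\|_\square$, and the triangle inequality completes the claim.

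The main obstacle is the monotonicity sub-lemma; setting up the conditional-expectation identity correctly and invoking the bounded-test-function estimate require care, though both are standard tools in graph limit theory. A minor technical point is that the refinement output by Lemma~\ref{lem:regularity-lemma} is not guaranteed to be an equipartition, so before reusing it as input to the next iteration (and before declaring the final $\caP$) I would subdivide each part into equal-measure pieces, which at most doubles the part count per iteration and is absorbed into the final constant.
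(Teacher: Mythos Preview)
Your approach is essentially the same as the paper's: start from the trivial partition and iterate Lemma~\ref{lem:regularity-lemma} once per dikernel, so that the final partition $\caP=\caP_T$ works for all $t$ simultaneously. The only substantive difference is in how the monotonicity under refinement is handled. The paper simply asserts, without proof, that whenever $\caQ$ refines $\caP$ one has $\|W-W_\caQ\|_\square\le\|W-W_\caP\|_\square$ (with constant $1$; this is a standard fact in graph limit theory, cf.\ Lov\'asz's book). You instead prove the inequality with an absolute constant $C_0$ via the conditional-expectation identity and then compensate by running the regularity lemma with parameter $\epsilon/C_0$. Both routes are valid; the paper's is shorter because it takes the sharp monotonicity as known, while yours is more self-contained. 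Your remark about forcing equipartitions between iterations is a genuine technical point that the paper's proof skips over; it is harmless for the bound, as you note, but worth saying.
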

\begin{proof}
  Let $\caP^0$ be a trivial partition, that is, a partition consisting
  of a single part $[n]$.  Then, for each $t \in [T]$, we iteratively
  apply Lemma~\ref{lem:regularity-lemma} with $\caP^{t-1}$, $W^t$, and
  $\epsilon $, and we obtain the partition $\caP^t$ into at most
  $|\caP^{t-1}|2^{C/\epsilon^2}$ parts such that
  $\|W^t -W^t_{\caP^t}\|_\square \leq \epsilon \|W^t\|_2$.  Since
  $\caP^{t}$ is a refinement of $\caP^{t-1}$, we have
  $\|W^i - W^i_{\caP^t}\|_\square \leq \|W^i -
  W^i_{\caP^{t-1}}\|_\square$
  for every $i \in [t-1]$.  Then, $\caP^T$ satisfies the desired
  property with
  $|\caP^T| \leq (2^{C/\epsilon^2})^T = 2^{CT/\epsilon^2}$.
\end{proof}

As long as $S$ is sufficiently large, $W$ and $\widehat{W|_S}$ are close in the cut norm:
\begin{lemma}[(4.15) of~\cite{Borgs:2008hd}]\label{lem:cut-norm-concentrates}
  Let $W:[0,1]^2  \to [-L,L]$ be a dikernel and $S$ be a sequence of $k$ elements uniformly and independently sampled from $[0,1]$.
  Then, we have
  \[
    -\frac{2L}{k}\leq \E_S \|\widehat{W|_S}\|_\square-\|W\|_\square < \frac{8L}{k^{1/4}}.
  \]
\end{lemma}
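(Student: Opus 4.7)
The inequality has an easy direction and a harder direction, so I would prove them separately. The lower bound $\E_S\|\widehat{W|_S}\|_\square \geq \|W\|_\square - 2L/k$ is a direct first-moment calculation; the upper bound $\E_S\|\widehat{W|_S}\|_\square < \|W\|_\square + 8L/k^{1/4}$ is the main technical step and combines the weak regularity lemma with a concentration argument.

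For the lower bound I would fix $\eta > 0$ and pick measurable sets $A^*, B^* \subseteq [0,1]$ with $\int_{A^*}\!\int_{B^*} W(x,y)\,\rmd x\,\rmd y \geq \|W\|_\square - \eta$ (after possibly flipping the sign of $W$). Given a sample $S = (x_1,\ldots,x_k)$, the index sets $U = \{i : x_i \in A^*\}$ and $V = \{j : x_j \in B^*\}$ correspond, inside $\widehat{W|_S}$, to unions of consecutive cells $I_i$, and so by definition of the cut norm
\[
\|\widehat{W|_S}\|_\square \;\geq\; \Bigl|\frac{1}{k^2}\sum_{i\in U,\,j\in V} W(x_i,x_j)\Bigr|.
\]
Splitting the inner sum into $i \neq j$ and $i = j$, the off-diagonal term has expectation $\frac{k(k-1)}{k^2}\int_{A^*}\!\int_{B^*} W \geq (1-1/k)(\|W\|_\square - \eta)$ by independence, while the diagonal contributes at most $L/k$ in absolute value. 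Using $\E|X| \geq |\E X|$, the trivial bound $\|W\|_\square \leq L$, and letting $\eta \to 0$ then yields the claim.

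For the upper bound I would first invoke Corollary~\ref{cor:regularity-lemma} (with $T=1$) to obtain, for a parameter $\epsilon$ to be tuned, an equipartition $\caP$ with $p := |\caP| \leq 2^{C/\epsilon^2}$ and $\|W - W_\caP\|_\square \leq \epsilon \|W\|_2 \leq \epsilon L$. The triangle inequality gives
\[
\|\widehat{W|_S}\|_\square \;\leq\; \|\widehat{W_\caP|_S}\|_\square + \|\widehat{(W - W_\caP)|_S}\|_\square,
\]
and I would bound each term in expectation. For the step-kernel term, the cut-norm supremum is attained by unions of parts of $\caP$, giving at most $4^p$ candidate pairs of cuts; McDiarmid's inequality together with a union bound controls its expectation by $\|W_\caP\|_\square + O(L\sqrt{p/k}) \leq \|W\|_\square + O(L\sqrt{p/k})$. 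For the residual term one shows $\E\|\widehat{(W-W_\caP)|_S}\|_\square \leq \|W-W_\caP\|_\square + O(L/\sqrt{k}) \leq \epsilon L + O(L/\sqrt{k})$ by a similar net/sampling argument applied to the residual kernel itself, and optimizing $\epsilon$ against the $2^{C/(2\epsilon^2)}/\sqrt{k}$ term yields the advertised $k^{-1/4}$ rate.

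The crux of the argument is the residual estimate: the naive bound $\|\cdot\|_\square \leq \|\cdot\|_1$ is useless here, since $\|W - W_\caP\|_1$ need not be small even when $\|W - W_\caP\|_\square$ is, so one has to sample the cut norm of the residual itself via a Frieze--Kannan-style net argument on the values of $W - W_\caP$. This is precisely what is carried out in the proof of~\cite{Borgs:2008hd}, and I would invoke their sampling estimate as a black box rather than reproducing the combinatorial details.
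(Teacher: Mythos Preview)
The paper does not prove this lemma at all: it is quoted from~\cite{Borgs:2008hd} (their~(4.15)) with no accompanying argument. So there is no ``paper's own proof'' to compare against --- the paper treats the entire statement as a black box, which is in the end what you also do for the hard half.

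Your lower-bound argument is correct and is the standard one. For the upper bound, the high-level plan (regularize, handle the step kernel via a union bound over at most $4^p$ cut pairs with a Hoeffding/McDiarmid estimate, then deal with the residual) matches the structure of the Borgs et al.\ proof. The step you should be more careful about is the residual estimate
\[
\E_S\bigl\|\widehat{(W-W_\caP)|_S}\bigr\|_\square \;\leq\; \|W-W_\caP\|_\square + O(L/\sqrt{k}).
\]
As written this is an instance of the very inequality you are proving, applied to the dikernel $U = W - W_\caP$, and with a \emph{better} rate ($k^{-1/2}$ rather than $k^{-1/4}$) than the lemma itself provides. A ``similar net/sampling argument'' does not yield this directly: a net on the values of $U$ would reintroduce a regularity-type partition and another exponential factor, making the bound circular. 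In~\cite{Borgs:2008hd} the residual is handled by working simultaneously with the partition $\caP$ on $[0,1]$ and the partition it induces on the sample, and comparing the two step approximations; the $k^{-1/4}$ rate comes out of optimizing $\epsilon$ against the $\sqrt{p/k}$ term from the step-kernel part alone. You are right that the cleanest route here is to cite their argument, but do not present the residual bound as if it followed from the same mechanism as the step-kernel bound --- that is precisely where the nontrivial work in the cited proof sits.
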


Finally, we need the following concentration inequality.
\begin{lemma}[Azuma's inequality]
  Let $(\Omega, A, P)$ be a probability space, $k$ be a positive
  integer, and $C > 0$.  Let $\bz = (z_1,\ldots , z_k)$, where
  $z_1,\ldots , z_k$ are independent random variables, and $z_i$ takes
  values in some measure space $(\Omega_i, A_i)$.  Let
  $f : \Omega_1 \times \cdots \times \Omega_k \to \bbR$ be a function.
  Suppose that $|f(\bx)-f(\by)| \leq C$ whenever $\bx$ and $\by$ only
  differ in one coordinate.  Then
  \[
    \Pr\Bigl[|f(\bz) - \E_\bz[f(\bz)]|  > \lambda C\Bigr]  < 2e^{-\lambda^2/2k}.
  \]
\end{lemma}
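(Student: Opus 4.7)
The plan is to prove this via the standard \emph{Doob martingale plus Chernoff bound} approach, which is the classical route to McDiarmid-type bounded-differences inequalities. Define the Doob martingale $M_i = \E[f(\bz) \mid z_1,\ldots,z_i]$ for $i=0,1,\ldots,k$, so that $M_0 = \E_\bz[f(\bz)]$ and $M_k = f(\bz)$, and write $f(\bz) - \E_\bz[f(\bz)] = \sum_{i=1}^k D_i$ with $D_i = M_i - M_{i-1}$. The two tasks are then (i) to show the martingale differences $D_i$ are bounded by $C$, and (ii) to run the exponential moment argument to obtain the Gaussian tail.

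For step (i), I would use the independence of the $z_j$'s to express the conditional expectations as integrals over product measure. Fixing $z_1,\ldots,z_{i-1}$ and letting $z_{i+1}',\ldots,z_k'$ denote independent copies, one can write
\[
D_i = \int \Bigl(f(z_1,\ldots,z_i,z_{i+1}',\ldots,z_k') - f(z_1,\ldots,z_{i-1},z_i',z_{i+1}',\ldots,z_k')\Bigr)\,\rmd P_i(z_i')\rmd P_{i+1}(z_{i+1}')\cdots\rmd P_k(z_k').
\]
The integrand differs only in the $i$-th coordinate, so by the bounded-differences hypothesis it is bounded by $C$ in absolute value, hence $|D_i|\le C$. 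Moreover $\E[D_i \mid z_1,\ldots,z_{i-1}]=0$ by the tower property.

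For step (ii), I would apply the Chernoff trick: for any $t>0$,
\[
\Pr\Bigl[\sum_{i=1}^k D_i > \lambda C\Bigr] \le e^{-t\lambda C}\,\E\Bigl[e^{t\sum_{i=1}^k D_i}\Bigr].
\]
Conditioning on $z_1,\ldots,z_{i-1}$ and using Hoeffding's lemma for the bounded, mean-zero variable $D_i \in [-C,C]$ gives $\E[e^{tD_i}\mid z_1,\ldots,z_{i-1}] \le e^{t^2 C^2/2}$. Iterating the tower property from $i=k$ down to $i=1$ yields $\E[e^{t\sum_i D_i}] \le e^{kt^2C^2/2}$. Optimizing over $t=\lambda/(kC)$ gives the one-sided bound $e^{-\lambda^2/(2k)}$, and the symmetric argument applied to $-f$ combined with a union bound yields the factor of $2$.

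The only genuinely delicate point is Hoeffding's lemma itself — showing that a mean-zero random variable $X$ with $|X|\le C$ satisfies $\E[e^{tX}] \le e^{t^2C^2/2}$. This I would obtain by convexity: $e^{tx} \le \frac{C-x}{2C}e^{-tC} + \frac{C+x}{2C}e^{tC}$ for $x\in[-C,C]$, take expectations, then bound the resulting $\log\cosh(tC)$ by $t^2C^2/2$ via a Taylor estimate on its second derivative. The conditional version used above follows by applying this lemma inside the conditional expectation, where $D_i$ plays the role of $X$ and the bound $|D_i|\le C$ holds pointwise. Everything else is bookkeeping; the main thing to be careful about is that all conditional expectations are well-defined and that the bounded-differences hypothesis really does translate to a \emph{pointwise} bound on $|D_i|$ rather than a bound only on its range of fluctuation.
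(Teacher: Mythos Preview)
Your proof is correct and is the standard route to this bounded-differences inequality (often called McDiarmid's inequality). However, there is nothing to compare against: the paper does not prove this lemma at all. It is stated as a named, known result (``Azuma's inequality'') and then simply applied in the proof of Lemma~\ref{lem:sampling-lemma}. So your Doob-martingale-plus-Hoeffding argument is not an alternative to the paper's proof; it is a proof where the paper offers none.
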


Now we prove the counterpart of Theorem~\ref{the:sampling} for the cut norm.
\begin{lemma}\label{lem:sampling-lemma}
  Let $W^1,\ldots,W^T: [0,1]^2 \to [-L,L]$ be dikernels.
  Let $S$ be a sequence of $k$ elements uniformly and independently sampled from $[0,1]$.
  Then, with a probability of at least $1 - \exp(-\Omega(k T/ \log_2k))$,
  there exists a measure-preserving bijection $\pi:[0,1] \to [0,1]$ such that, for every $t \in [T]$, we have
  \[
    \|W^t - \pi(\widehat{W^t|_S})\|_{\square} = O\Bigl(L\sqrt{T/\log_2 k}\Bigr).
  \]
\end{lemma}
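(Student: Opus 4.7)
The plan is to combine the weak regularity lemma (Corollary~\ref{cor:regularity-lemma}), the sampling concentration for the cut norm (Lemma~\ref{lem:cut-norm-concentrates}), and Azuma's inequality, via a three-term triangle-inequality decomposition. The same measure-preserving bijection $\pi$ must serve every $t\in[T]$, which forces us to work with a \emph{single} partition common to all $W^t$; this is exactly what Corollary~\ref{cor:regularity-lemma} provides.

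\paragraph{Step 1 (common regularity partition).} Apply Corollary~\ref{cor:regularity-lemma} to $W^1,\ldots,W^T$ with a parameter $\epsilon_0=\Theta(\sqrt{T/\log_2 k})$, obtaining an equipartition $\caP=(V_1,\ldots,V_q)$ of $[0,1]$, common to all $t$, with $q\le 2^{CT/\epsilon_0^2}$, such that $\|W^t-W^t_\caP\|_\square\le\epsilon_0 L$ for every $t$. Choosing the hidden constant in $\epsilon_0$ suitably large keeps $q$ safely below $k$ (e.g.\ $q\le k^{1/2}$), which is needed in Step~4.

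\paragraph{Step 2 (triangle inequality).} For any measure-preserving bijection $\pi$, and using that $\pi$ preserves $\|\cdot\|_\square$,
\begin{equation*}
\|W^t-\pi(\widehat{W^t|_S})\|_\square \le \|W^t-W^t_\caP\|_\square + \|W^t_\caP-\pi(\widehat{W^t_\caP|_S})\|_\square + \|\widehat{(W^t-W^t_\caP)|_S}\|_\square.
\end{equation*}
The first summand is $\le\epsilon_0 L$ by Step~1.

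\paragraph{Step 3 (concentration for the third summand).} Apply Lemma~\ref{lem:cut-norm-concentrates} to $W^t-W^t_\caP$, which has cut norm $\le\epsilon_0 L$ and range in $[-2L,2L]$, giving $\E_S\|\widehat{(W^t-W^t_\caP)|_S}\|_\square\le \epsilon_0 L+O(L/k^{1/4})$. For concentration, apply Azuma's inequality to $f(S)=\|\widehat{(W^t-W^t_\caP)|_S}\|_\square$: changing one coordinate of $S$ modifies a single row and column of a $k\times k$ matrix with entries in $[-2L,2L]$, so the corresponding dikernel changes by at most $O(L/k)$ in $L^1$, hence in cut norm, giving $C=O(L/k)$. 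Setting the target deviation to $L\sqrt{T/\log_2 k}$ yields $\lambda=\Theta(k\sqrt{T/\log_2 k})$ and failure probability $\exp(-\Omega(kT/\log_2 k))$ per $t$. A union bound over $t\in[T]$ preserves this order.

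\paragraph{Step 4 (alignment of the empirical step function).} Because $W^t_\caP$ is a step function on $\caP$, the dikernel $\widehat{W^t_\caP|_S}$ is itself a step function whose block widths are the empirical frequencies $k_i/k$, where $k_i=|\{j:x_j\in V_i\}|$. Define $\pi$ (depending only on $S$ and $\caP$, hence the same for all $t$) to map the empirical blocks onto $V_1,\ldots,V_q$ in the natural order, placing as much mass of the $i$th empirical block as possible inside $V_i$. The resulting misaligned region has measure $O(\sum_i|k_i/k-1/q|)$, which by a Chernoff bound and a union bound over the $q$ parts is $O(\sqrt{q\log q/k})$ with probability $1-\exp(-\Omega(k/q))$. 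This bounds $\|W^t_\caP-\pi(\widehat{W^t_\caP|_S})\|_\square$ by $O(L\sqrt{q\log q/k})$, which is $o(L\sqrt{T/\log_2 k})$ under the choice of $q$ in Step~1.

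\paragraph{Step 5 (assembly).} Summing the three contributions gives the required $O(L\sqrt{T/\log_2 k})$ bound simultaneously for every $t$, and both bad events (Azuma deviations in Step~3, empirical imbalance in Step~4) have probability $\exp(-\Omega(kT/\log_2 k))$.

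\paragraph{Main obstacle.} The delicate point is the joint choice of $\epsilon_0$ and $q$: the regularity error forces $\epsilon_0\gtrsim\sqrt{T/\log_2 k}$, while the alignment error in Step~4 forces $q$ strictly sub-$k$, and the Azuma bound in Step~3 must produce deviation $\Theta(L\sqrt{T/\log_2 k})$ at failure probability $\exp(-\Omega(kT/\log_2 k))$. Verifying that a single $\epsilon_0$ makes all three error terms and both tail probabilities line up---especially the fact that $\epsilon_0=\Theta(\sqrt{T/\log_2 k})$ exactly absorbs the sampling bias $L/k^{1/4}$ from Lemma~\ref{lem:cut-norm-concentrates}---is the main technical check.
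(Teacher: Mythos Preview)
Your approach is essentially the paper's: the same three-term triangle decomposition, the same common regularity partition via Corollary~\ref{cor:regularity-lemma}, the same use of Lemma~\ref{lem:cut-norm-concentrates} for the sampled-difference term, and the same alignment construction for $\pi$. The only structural difference is that the paper first bounds the \emph{expectation} of $\|W^t-\pi(\widehat{W^t|_S})\|_\square$ (handling the alignment term via $\Var(Z_i)$ and Cauchy--Schwarz, obtaining $\E\|W^t_\caP-\pi(\widehat{W^t_\caP|_S})\|_\square\le 2L\sqrt{q/k}$) and then applies Azuma \emph{once} to the whole quantity, whereas you apply concentration piece by piece.

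There is, however, a gap in your Step~4/Step~5 probability accounting. Your Chernoff-plus-union-bound argument for the empirical block sizes yields failure probability $\exp(-\Omega(k/q))$; with $q$ as large as $k^{1/2}$ (your own choice in Step~1) this is only $\exp(-\Omega(k^{1/2}))$, which does \emph{not} imply the target $\exp(-\Omega(kT/\log_2 k))$ (for constant $T$ the latter is $\exp(-\Omega(k/\log_2 k))$, far smaller). The paper sidesteps this by never invoking a tail bound on the $Z_i$: it controls the alignment term only in expectation and defers all concentration to a single Azuma application on $S\mapsto\|W^t-\pi_S(\widehat{W^t|_S})\|_\square$, which has bounded differences $O(L/k)$ even though $\pi$ depends on $S$. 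You can repair your argument the same way, or alternatively apply Azuma directly to $S\mapsto\sum_i|k_i/k-1/q|$, which has bounded differences $O(1/k)$ and therefore gives the correct $\exp(-\Omega(kT/\log_2 k))$ tail at deviation $\Theta(\sqrt{T/\log_2 k})$.
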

\begin{proof}
  First, we bound the expectations and then prove their concentrations.
  We apply Corollary~\ref{cor:regularity-lemma} to $W^1,\ldots,W^T$ and $\epsilon$, and let $\caP = (V_1,\ldots,V_p)$ be the obtained partition with $p \leq 2^{CT/\epsilon^2}$ parts such that
  \[
    \|W^t - W^t_\caP\|_\square  \leq  \epsilon L.
  \]
  for every $t \in [T]$.
  By Lemma~\ref{lem:cut-norm-concentrates}, for every $t \in [T]$, we have
  \[
    \E_S \|\widehat{W^t_\caP|_S}- \widehat{W^t|_S}\|_\square  =
    \E_S
    \|
    \stackon[-8pt]{$(W^t_\caP - W^t)|_S$}{\vstretch{1.5}{\hstretch{2.4}{\widehat{\phantom{\;\;\;\;\;\;\;\;\;}}}}}
    \|_\square
    \leq  \epsilon L + \frac{8L}{k^{1/4}}.
  \]
  Then, for any measure-preserving bijection $\pi:[0,1] \to [0,1]$ and $t \in [T]$, we have
  \begin{align}
    \E_S\|W^t - \pi(\widehat{W^t|_S})\|_\square
    & \leq  \|W^t - W^t_{\caP}\|_\square + \E_S\|W^t_\caP -  \pi(\widehat{W^t_\caP|_S})\|_\square  + \E_S\|\pi(\widehat{W^t_\caP|_S}) - \pi(\widehat{W^t|_S})\|_\square \nonumber \\
    & \leq 2\epsilon L+\frac{8L}{k^{1/4}} +\E_S \|W^t_\caP - \pi(\widehat{W^t_\caP|_S})\|_\square. \label{eq:as-a-consequence}
  \end{align}

  Thus, we are left with the problem of sampling from $\caP$.  Let
  $S = \set{x_1,\ldots,x_k}$ be a sequence of independent random
  variables that are uniformly distributed in $[0, 1]$, and let $Z_i$
  be the number of points $x_j$ that fall into the set $V_i$.  It is
  easy to compute that
  \[
    \E[Z_i] = \frac{k}{p} \quad \text{and} \quad \Var[Z_i] = \Bigl(\frac{1}{p} - \frac{1}{p^2}\Bigr)k < \frac{k}{p}.
  \]
  The partition $\caP'$ of $[0, 1]$ is constructed into the sets
  $V'_1, \ldots , V'_p$ such that $\lambda(V'_i) = Z_i/k$ and
  $\lambda(V_i \cap V'_i) = \min(1/p, Z_i/k)$.  For each $t \in [T]$,
  we construct the dikernel $\overline{W}^t:[0,1]\to \bbR$ such that the
  value of $\overline{W}^t$ on $V'_i \times V'_j$ is the same as the
  value of $W^t_\caP$ on $V_i \times V_j$.  Then, $\overline{W}^t$
  agrees with $W^t_\caP$ on the set
  $Q=\bigcup_{i,j \in [p]}(V_i\cap V'_i)\times (V_j\cap V_j')$.  Then,
  there exists a bijection $\pi$ such that
  $\pi(\widehat{W^t_\caP|_S}) = \overline{W}^t$ for each $t \in [T]$.
  Then, for every $t \in [T]$, we have
  \begin{align*}
    & \|W^t_\caP - \pi(\widehat{W^t_\caP|_S})\|_\square
    =
    \|W^t_\caP -\overline{W}^t\|_\square  \leq \|W^t_\caP -\overline{W}^t\|_1
    \leq  2L(1-\lambda(Q)) \\
    & = 2L\Bigl(1-  \Bigl( \sum_{i \in [p]}\min\Bigl(\frac{1}{p},\frac{Z_i}{k}\Bigr)\Bigr)^2\Bigr)
    \leq  4L\Bigl(1 - \sum_{i \in [p]} \min \Bigl(\frac{1}{p},\frac{Z_i}{k}\Bigr)\Bigr) \\
    & = 2L \sum_{i \in [p]} \Bigl| \frac{1}{p} - \frac{Z_i}{k}\Bigr| \leq 2L\Bigl(p\sum_{i \in [p]} \Bigl(\frac{1}{p} - \frac{Z_i}{k}\Bigr)^2\Bigr)^{1/2},
  \end{align*}
  which we rewrite as
  \[
    \|W^t_\caP - \pi(\widehat{W^t_\caP|_S})\|_\square^2 \leq  4L^2p \sum_{i \in [p]} \Bigl(\frac{1}{p} - \frac{Z_i}{k}\Bigr)^2.
  \]
  The expectation of the right hand side is $(4L^2p/k^2)\sum_{i \in [p]} \Var(Z_i) < 4L^2p/k$.
  By the Cauchy-Schwartz inequality, $\E\|W^t_\caP - \pi(\widehat{W^t_\caP|_S})\|_\square \leq  2L\sqrt{p/k}$.

  Inserted this into~\eqref{eq:as-a-consequence}, we obtain
  \[
    \E\|W^t - \pi(\widehat{W^t|_S})\|_\square \leq 2\epsilon L+ \frac{8L}{k^{1/4}} + 2L\sqrt{\frac{p}{k}} \leq 2\epsilon L+ \frac{8L}{k^{1/4}} + \frac{2L}{k^{1/2}}2^{CT/\epsilon^2}.
  \]
  Choosing $\epsilon = \sqrt{CT/(\log_2 k^{1/4})} = \sqrt{4CT/(\log_2 k)}$, we obtain the upper bound
  \[
    \E\|W^t - \pi(\widehat{W^t|_S})\|_\square \leq 2L\sqrt{\frac{4CT}{\log_2 k}} + \frac{8L}{k^{1/4}} + \frac{2L}{k^{1/4}} = O\Bigl(L\sqrt{\frac{T}{\log_2 k}}\Bigr).
  \]
  Observing that $\|W^t - \pi(\widehat{W^t|_S})\|_\square$ changes by at most $O(L/k)$ if one element in $S$ changes, we apply Azuma's inequality with $\lambda = k\sqrt{T/\log_2 k}$ and the union bound to complete the proof.
\end{proof}
The proof of Theorem~\ref{the:sampling} is immediately follows from
Lemmas~\ref{lem:small-cut-norm->negligible}
and~\ref{lem:sampling-lemma}.


\section{Analysis of Algorithm~\ref{alg:quadratic-form}}\label{sec:algorithm}

In this section, we analyze Algorithm~\ref{alg:quadratic-form}.
Because we want to use dikernels for the analysis, we introduce a
continuous version of $p_{n,A,\bd,\bb}$
(recall~\eqref{eq:the-problem}).  The real-valued function
$P_{n,A,\bd,\bb}$ on the functions $f:[0,1]\to \bbR$ is defined as
\[
  P_{n,A,\bd,\bb}(f) = \bracket{f}{\widehat{A} f} + \bracket{f^2}{\widehat{\bd \bone^\top} 1} + \bracket{f}{\widehat{\bb \bone^\top}1},
\]
where $f^2:[0,1]\to \bbR$ is a function such that $f^2(x) = f(x)^2$
for every $x \in [0,1]$ and $1:[0,1] \to \bbR$ is the constant
function that has a value of $1$ everywhere.  The following lemma
states that the minimizations of $p_{n,A,\bd,\bb}$ and
$P_{n,A,\bd,\bb}$ are equivalent:
\begin{lemma}\label{lem:equivalence}
  Let $A \in \bbR^{n \times n}$ be a matrix and $\bd,\bb \in \bbR^{n \times n}$ be vectors.
  Then, we have
  \[
    \min_{\bv \in [-K,K]^n}p_{n,A,\bd,\bb}(\bv) =
    n^2 \cdot \inf_{f:[0,1] \to [-K,K]}P_{n,A,\bd,\bb}(f).
  \]
  for any $K > 0$.
\end{lemma}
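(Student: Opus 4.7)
My plan is to prove the identity by setting up a correspondence between vectors $\bv\in[-K,K]^{n}$ and functions $f:[0,1]\to[-K,K]$, exploiting the fact that the three dikernels $\widehat{A}$, $\widehat{\bd\bone^{\top}}$, and $\widehat{\bb\bone^{\top}}$ are all piecewise constant on the grid $\{I_{i}\times I_{j}\}_{i,j\in[n]}$. Consequently, each of the three inner products defining $P(f)$ depends on $f$ only through the block statistics $n\int_{I_{i}}f$ and $n\int_{I_{i}}f^{2}$, so the continuous problem essentially reduces to one over such block averages.

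For the inequality $n^{2}\inf_{f}P(f)\le\min_{\bv}p_{n,A,\bd,\bb}(\bv)$, I will associate to each $\bv\in[-K,K]^{n}$ the step function $f_{\bv}(x):=v_{i_{n}(x)}$, which lies in $[-K,K]$. Using $\widehat{A}(x,y)=A_{i_{n}(x)i_{n}(y)}$, $(\widehat{\bd\bone^{\top}}\,1)(x)=d_{i_{n}(x)}$, and $(\widehat{\bb\bone^{\top}}\,1)(x)=b_{i_{n}(x)}$, a direct term-by-term computation on the blocks yields $\bracket{f_{\bv}}{\widehat{A}f_{\bv}}=n^{-2}\langle\bv,A\bv\rangle$, $\bracket{f_{\bv}^{2}}{\widehat{\bd\bone^{\top}}\,1}=n^{-1}\langle\bv,\diag(\bd)\bv\rangle$, and $\bracket{f_{\bv}}{\widehat{\bb\bone^{\top}}\,1}=n^{-1}\langle\bb,\bv\rangle$. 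Summing and multiplying by $n^{2}$ gives $n^{2}P(f_{\bv})=p_{n,A,\bd,\bb}(\bv)$; restricting the continuous infimum to step functions of this form then proves this direction.

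For the reverse inequality, given any $f:[0,1]\to[-K,K]$ I will construct $\bv\in[-K,K]^{n}$ with $p(\bv)\le n^{2}P(f)$. The natural choice is $v_{i}:=n\int_{I_{i}}f$, which lies in $[-K,K]$. By the block-constancy observation, the $A$-quadratic and $\bb$-linear pieces of $n^{2}P(f)$ equal their counterparts in $p(\bv)$ exactly, while the $\bd$-piece becomes $n\sum_{i}d_{i}\beta_{i}$ with $\beta_{i}:=n\int_{I_{i}}f^{2}$, and Jensen's inequality supplies $\beta_{i}\ge v_{i}^{2}$. Thus at coordinates with $d_{i}\ge 0$ the comparison $d_{i}v_{i}^{2}\le d_{i}\beta_{i}$ is immediate.

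The main obstacle is the coordinates with $d_{i}<0$, where the Jensen bound goes the wrong way. To handle these I plan to override the natural choice and instead take $v_{i}\in\{-K,+K\}$, which forces $v_{i}^{2}=K^{2}\ge\beta_{i}$ and therefore reverses the sign of that term's contribution in the desired direction; the sign must be chosen by a coordinate-wise optimization (or a random-sign argument) so that the cross-terms in $\langle\bv,A\bv\rangle$ and the linear term $\langle\bb,\bv\rangle$ do not grow beyond what appears in $n^{2}P(f)$. Once this technical point is dispatched, combining the two inequalities yields the claimed equality.
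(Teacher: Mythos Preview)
Your first direction (the inequality $n^{2}\inf_{f}P\le\min_{\bv}p$) is correct and is exactly what the paper does. The gap is in the reverse direction, precisely at the ``technical point'' you flag but do not dispatch: when $d_{i}<0$ you override $v_{i}:=\pm K$ and then must argue that the resulting change in $\langle\bv,A\bv\rangle$ and $\langle\bb,\bv\rangle$ can be absorbed. No choice of sign does this in general, and in fact the lemma as stated is false without an additional hypothesis such as $d_{i}\ge 0$. Take $n=1$, $A=(M)$ with $M>1$, $\bd=(-1)$, $\bb=(0)$. Then $p(v)=(M-1)v^{2}$ has minimum $0$ over $[-K,K]$, while $P(f)=M\bigl(\int_{0}^{1}f\bigr)^{2}-\int_{0}^{1}f^{2}$; the function $f$ equal to $K$ on $[0,\tfrac12]$ and $-K$ on $(\tfrac12,1]$ gives $P(f)=-K^{2}$, so $n^{2}\inf_{f}P\le -K^{2}<0=\min_{v}p(v)$. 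Your override here forces $v_{1}=\pm K$ and yields $p(\pm K)=(M-1)K^{2}>0$, which is not $\le -K^{2}$.

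For comparison, the paper does not use Jensen; it writes the functional derivative $\partial P/\partial f(x)$, observes that its form depends only on the block index $i_{n}(x)$, and concludes that the optimizer $f^{*}$ may be taken constant on each $I_{i}$, after which the vector $v_{i}:=f^{*}|_{I_{i}}$ satisfies $p(\bv)=n^{2}P(f^{*})$. This variational argument sidesteps your sign problem, but it is incomplete for the same underlying reason: when $d_{i_{n}(x)}<0$ the map $f(x)\mapsto P(f)$ is concave in $f(x)$, so the infimum is attained at the boundary $\pm K$, and (as the counterexample shows) a non-constant bang-bang choice on a single block can strictly beat every constant choice. If one adds the hypothesis $d_{i}\ge 0$ for all $i$ (which holds in all of the paper's applications), your Jensen step $d_{i}v_{i}^{2}\le d_{i}\beta_{i}$ goes through for every $i$ with $v_{i}=n\int_{I_{i}}f$, the override is unnecessary, and your argument is complete; the paper's first-order argument likewise becomes valid since $P$ is then convex in $f(x)$ on each block.
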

\begin{proof}
  First, we show that $n^2 \cdot \inf_{f:[0,1] \to [-K,K]}P_{n,A,\bd,\bb}(f) \leq \min_{\bv \in [-K,K]^n}p_{n,A,\bd,\bb}(\bv)$.
  Given a vector $\bv \in [-K,K]^n$, we define $f: [0,1] \to [-K,K]$ as $f(x) = v_{i_n(x)}$.
  Then,
  \begin{align*}
    \bracket{f}{\widehat{A}f} & =
    \sum_{i,j \in [n]} \int_{I_i}\int_{I_j}A_{ij} f(x)f(y) \rmd x \rmd y
    =
    \frac{1}{n^2}\sum_{i,j \in [n]} A_{ij}v_i v_j
    =
    \frac{1}{n^2}\bracket{\bv}{A\bv},   \\
    \bracket{f^2}{\widehat{\bd \bone^\top} 1} & =
    \sum_{i,j \in [n]} \int_{I_i}\int_{I_j}d_i f(x)^2 \rmd x \rmd y
    =
    \sum_{i \in [n]} \int_{I_i}d_i f(x)^2 \rmd x
    =
    \frac{1}{n}\sum_{i \in [n]}d_iv_i^2
    =
    \frac{1}{n}\bracket{\bv}{\diag(\bd) \bv},   \\
    \bracket{f}{\widehat{\bb \bone^\top}1} & =
    \sum_{ i,j \in [n]} \int_{I_i}\int_{I_j}b_i f(x) \rmd x \rmd y
    =
    \sum_{ i \in [n]} \int_{I_i}b_i f(x) \rmd x
    =
    \frac{1}{n}\sum_{i \in [n]}b_i v_i
    =
    \frac{1}{n}\bracket{\bv}{\bb}.
  \end{align*}
  Then, we have $n^2 P_{n,A,\bd,\bb}(f) \leq p_{n,A,\bd,\bb}(\bv)$.

  Next, we show that $\min_{\bv \in [-K,K]^n}p_{n,A,\bd,\bb}(\bv) \leq n^2 \cdot \inf_{f:[0,1] \to [-K,K]}P_{n,A,\bd,\bb}(f)$.
  Let $f:[0,1] \to [-K,K]$ be a measurable function.
  Then, for $x \in [0,1]$, we have
  \[
    \frac{\partial P_{n,A,\bd,\bb}(f(x))}{\partial f(x)} = \sum_{i \in [n]}\int_{I_i}A_{i i_n(x)}f(y)\rmd y + \sum_{j \in [n]}\int_{I_j}A_{i_n(x)j}f(y)\rmd y + 2d_{i_n(x)}f(x) + b_{i_n(x)}.
  \]
  Note that the form of this partial derivative only depends on $i_n(x)$; hence, in the optimal solution $f^*:[0,1] \to [-K,K]$, we can assume $f^*(x) = f^*(y)$ if $i_n(x) = i_n(y)$.
  In other words, $f^*$ is constant on each of the intervals $I_1,\ldots,I_n$.
  For such $f^*$, we define the vector $\bv\in \bbR^n$ as $v_i = f^*(x)$, where $x \in [0,1]$ is any element in $I_i$.
  Then, we have
  \begin{align*}
    \bracket{\bv}{A\bv} & = \sum_{i,j\in [n]}A_{ij}v_iv_j
    =
    n^2\sum_{i,j\in [n]}\int_{I_i}\int_{I_j}A_{ij}f^*(x)f^*(y)\rmd x \rmd y
    =
    n^2\bracket{f^*}{\widehat{A}f^*}, \\
    \bracket{\bv}{\diag(\bd) \bv}
    & = \sum_{i\in [n]}d_iv_i^2
    = n\sum_{i\in [n]}\int_{I_i}d_i f^*(x)^2\rmd x
    = n\bracket{(f^*)^2}{\widehat{\bd \bone^T}1},\\
    \bracket{\bv}{\bb}
    & = \sum_{i\in [n]}b_iv_i
    = n\sum_{i\in [n]}\int_{I_i}b_i f^*(x)\rmd x
    = n\bracket{f^*}{\widehat{\bb \bone^T}1}.
  \end{align*}
  Finally, we have
  $p_{n,A,\bd,\bb}(\bv) \leq n^2 P_{n,A,\bd,\bb}(f^*)$.
\end{proof}

Now we show that Algorithm~\ref{alg:quadratic-form} well-approximates the optimal value of~\eqref{eq:the-problem} in the following sense:
\begin{theorem}\label{the:error}
  Let $\bv^*$ and $z^*$ be an optimal solution and the optimal value,
  respectively, of problem~\eqref{eq:the-problem}.  By choosing
  $k(\epsilon,\delta) =
  2^{\Theta(1/\epsilon^2)}+\Theta(\log\frac{1}{\delta}\log \log
  \frac{1}{\delta})$,
  with a probability of at least $1-\delta$, a sequence $S$ of $k$
  indices independently and uniformly sampled from $[n]$ satisfies the
  following: Let $\tilde{\bv}^*$ and $\tilde{z}^*$ be an optimal
  solution and the optimal value, respectively, of the problem
  $\min_{\bv \in \bbR^k}p_{k,A|_S,\bd|_S,\bb|_S}(\bv)$.  Then, we have
  \[
    \Bigl|\frac{n^2}{k^2}\tilde{z}^* - z^*\Bigr| \leq \epsilon LK^2n^2,
  \]
  where $K = \max\set{\max_{i \in [n]}|v^*_i|, \max_{i \in [n]}|\tilde{v}^*_i|}$ and $L = \max\set{\max_{i,j}|A_{ij}|,\max_i |d_i|, \max_i |b_i|}$.
\end{theorem}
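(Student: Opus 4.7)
The plan is to lift problem~\eqref{eq:the-problem} to its continuous (dikernel) counterpart via Lemma~\ref{lem:equivalence}, apply Theorem~\ref{the:sampling} simultaneously to the three dikernels $\widehat{A},\widehat{\bd\bone^\top},\widehat{\bb\bone^\top}$, and then use the measure-preserving bijection it produces to transport optimizers between the continuous problems of ``size'' $n$ and $k$.

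First, since $\bv^*$ and $\tilde{\bv}^*$ lie in $[-K,K]^n$ and $[-K,K]^k$ respectively, Lemma~\ref{lem:equivalence} yields $z^*=n^2\inf_{f:[0,1]\to[-K,K]}P_{n,A,\bd,\bb}(f)$ and $\tilde{z}^*=k^2\inf_{f:[0,1]\to[-K,K]}P_{k,A|_S,\bd|_S,\bb|_S}(f)$, so $\bigl|\tfrac{n^2}{k^2}\tilde{z}^*-z^*\bigr|=n^2\bigl|\inf P_{k,A|_S,\bd|_S,\bb|_S}-\inf P_{n,A,\bd,\bb}\bigr|$, and it suffices to bound this difference of continuous infima by $\epsilon LK^2$. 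Using the fact (from Section~\ref{sec:pre}) that the distribution of $A|_S$ with $S\subseteq[n]$ matches that of $\widehat{A}|_S$ with $S\subseteq[0,1]$, I would invoke Theorem~\ref{the:sampling} with $T=3$ on the dikernels $\widehat{A},\widehat{\bd\bone^\top},\widehat{\bb\bone^\top}$, all bounded by $L$. Setting $\sqrt{3/\log_2 k}=\Theta(\epsilon)$ forces $k\geq 2^{\Theta(1/\epsilon^2)}$, and requiring $\exp(-\Omega(k/\log_2k))\leq\delta$ contributes the $\Theta(\log(1/\delta)\log\log(1/\delta))$ summand.

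With probability at least $1-\delta$, Theorem~\ref{the:sampling} supplies a measure-preserving bijection $\pi$ such that each of the three bilinear forms $\bracket{f}{\widehat{A}f}$, $\bracket{f^2}{\widehat{\bd\bone^\top}\,1}$, and $\bracket{f}{\widehat{\bb\bone^\top}\,1}$ changes by at most $O(\epsilon LK^2)$ when $W^t$ is replaced by $\pi(\widehat{W^t|_S})$. The measure-preserving change-of-variables identity $\bracket{f}{\pi(\widehat{W|_S})g}=\bracket{f\circ\pi^{-1}}{\widehat{W|_S}(g\circ\pi^{-1})}$, applied termwise and combined with $f^2\circ\pi^{-1}=(f\circ\pi^{-1})^2$ and $1\circ\pi^{-1}=1$, then yields $P_{n,A,\bd,\bb}(f)=P_{k,A|_S,\bd|_S,\bb|_S}(f\circ\pi^{-1})\pm O(\epsilon LK^2)$ for every $f:[0,1]\to[-K,K]$. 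Since $f\mapsto f\circ\pi^{-1}$ is a bijection of $\{f:[0,1]\to[-K,K]\}$ onto itself, taking the infimum on each side in turn gives $\bigl|\inf P_{n,A,\bd,\bb}-\inf P_{k,A|_S,\bd|_S,\bb|_S}\bigr|=O(\epsilon LK^2)$, and multiplying by $n^2$ completes the proof.

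The main subtlety is the asymmetry in magnitudes across the three terms: while $\bracket{f}{\widehat{A}f}$ naturally scales as $K^2$, the term $\bracket{f^2}{\widehat{\bd\bone^\top}\,1}$ pairs $f^2$ (of sup-norm $K^2$) against the constant $1$. To avoid incurring a $K^4$ factor one must use the bilinear version of the bound underlying Lemma~\ref{lem:small-cut-norm->negligible} (i.e., $|\bracket{f}{Wg}|\leq K_1K_2\|W\|_\square$ when $f,g$ have sup-norms $K_1,K_2$), which the proof of Theorem~\ref{the:sampling} in fact makes available by bilinearity. Apart from this bookkeeping, the argument is a direct union bound over the three dikernels followed by the measure-preserving change of variables.
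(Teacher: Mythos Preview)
Your proposal is correct and follows essentially the same route as the paper's own proof: lift both problems to their continuous versions via Lemma~\ref{lem:equivalence}, apply Theorem~\ref{the:sampling} with $T=3$ to the dikernels $\widehat{A}$, $\widehat{\bd\bone^\top}$, $\widehat{\bb\bone^\top}$, and use the resulting measure-preserving bijection to compare the two continuous infima. Your explicit change-of-variables identity and your handling of the $K^2$-versus-$K^4$ scaling in the $\bracket{f^2}{\cdot}$ term (via the bilinear form of Lemma~\ref{lem:small-cut-norm->negligible}) make precise points that the paper's proof passes over more tersely.
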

\begin{proof}

  We instantiate Theorem~\ref{the:sampling} with
  $k = 2^{\Theta(1/\epsilon^2)}+\Theta(\log\frac{1}{\delta}\log \log
  \frac{1}{\delta})$
  and the dikernels $\widehat{A}$, $\widehat{\bd \bone^\top}$, and
  $\widehat{\bb \bone^\top}$.  Then, with a probability of at least
  $1-\delta$, there exists a measure preserving bijection
  $\pi:[0,1]\to [0,1]$ such that
  \[
    \max \Bigl\{
    |\bracket{f}{(\widehat{A} - \pi(\widehat{A|_S}))f}|,
    |\bracket{f^2}{(\widehat{\bd \bone^\top} - \pi(\widehat{\bd \bone^\top|_S}))1}| ,
    |\bracket{f}{(\widehat{\bb \bone^\top} - \pi(\widehat{\bb \bone^\top|_S}))1}|\Bigr\} \leq \frac{\epsilon LK^2}{3}
  \]
  for any function $f:[0,1]\to [-K,K]$.
  Then, we have
  \begin{align*}
    \tilde{z}^*
    & =
    \min_{\bv \in \bbR^k}p_{k,A|_S,\bd|_S,\bb|_S}(\bv)
    = \min_{\bv \in [-K,K]^k}p_{k,A|_S,\bd|_S,\bb|_S}(\bv)\\
    & =
    k^2 \cdot \inf_{f:[0,1]\to [-K,K]} P_{k,A|_S,\bd|_S,\bb|_S} (f) \tag{By Lemma~\ref{lem:equivalence}}  \\
    & =
    k^2 \cdot \inf_{f:[0,1]\to [-K,K]}\Bigl(\bracket{f}{(\pi(\widehat{A|_S}) - \widehat{A})f} + \bracket{f}{\widehat{A} f} + \bracket{f^2}{(\pi(\widehat{\bd \bone^\top|_S}) - \widehat{\bd \bone^\top})1} + \bracket{f^2}{\widehat{\bd \bone^\top}1} + \\
    & \qquad \qquad \qquad \qquad \bracket{f}{(\pi(\widehat{\bb \bone^\top|_S}) - \widehat{\bb \bone^\top})1} + \bracket{f}{\widehat{\bb \bone^\top}1}\Bigr)\\
    & \leq
    k^2 \cdot \inf_{f:[0,1]\to [-K,K]} \Bigl(\bracket{f}{\widehat{A} f}+\bracket{f^2}{\widehat{\bd \bone^\top}1 } +\bracket{f}{\widehat{\bb \bone^\top}1 } \pm  \epsilon LK^2 \Bigr)\\
    & =
    \frac{k^2}{n^2} \cdot \min_{\bv \in [-K,K]^n} p_{n,A,\bd,\bb}(\bv) \pm \epsilon LK^2 k^2. \tag{By Lemma~\ref{lem:equivalence}} \\
    & =
    \frac{k^2}{n^2} \cdot \min_{\bv \in \bbR^n} p_{n,A,\bd,\bb}(\bv) \pm \epsilon LK^2 k^2 = \frac{k^2}{n^2}z^* \pm \epsilon LK^2 k^2.
  \end{align*}
  Rearranging the inequality, we obtain the desired result.
\end{proof}
We can show that $K$ is bounded when $A$ is symmetric and full rank.
To see this, we first note that we can assume $A + n\diag(\bd)$ is positive-definite, as otherwise $p_{n,A,\bd,\bb}$ is not bounded and the problem is uninteresting.
Then, for any set $S \subseteq [n]$ of $k$ indices, $(A+n\diag(\bd))|_S$ is again positive-definite because it is a principal submatrix.
Hence, we have $\bv^* = (A+n\diag(\bd))^{-1}n\bb/2$ and $\tilde{\bv}^* = (A|_S+n\diag(\bd|_S))^{-1}n\bb|_S/2$, which means that $K$ is bounded.



\section{Experiments}\label{sec:experiments}

In this section, we demonstrate the effectiveness of our method by
experiment.  All experiments were conducted on an Amazon EC2
c3.8xlarge instance. Error bars indicate the standard deviations over
ten trials with different random seeds.

\begin{figure}[tb]
\centering
  \begin{minipage}[t]{.4\linewidth}
  \vspace{0pt}
  \begin{figure}[H]
  \centering
    \centering
    \includegraphics[width=1\linewidth]{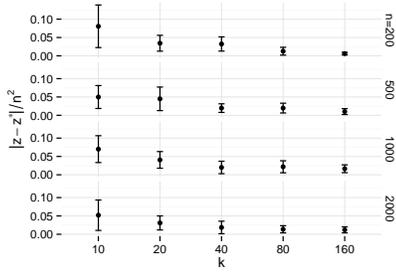}
  \caption{Numerical simulation: absolute approximation error scaled by $n^2$.}\label{fig:toy}
  \end{figure}
  \end{minipage}
  \begin{minipage}[t]{.5\linewidth}
  \vspace{0pt}
  \begin{table}[H]
    \small
    \caption{Pearson divergence: runtime (second).}\label{tab:kernel-time}
    \centering
    \begin{tabular}{rrrrrr}
  \hline
 & k & $n=500$ & 1000 & 2000 & 5000 \\ 
  \hline
\multirow{4}{*}{\rotatebox{90}{Proposed}}  & 20 & $0.002$ & $0.002$ & $0.002$ & $0.002$ \\ 
   & 40 & $0.003$ & $0.003$ & $0.003$ & $0.003$ \\ 
   & 80 & $0.007$ & $0.007$ & $0.008$ & $0.008$ \\ 
   & 160 & $0.030$ & $0.030$ & $0.033$ & $0.035$ \\ 
  \hline
\multirow{4}{*}{\rotatebox{90}{Nystr\"{o}m}} & 20 & $0.005$ & $0.012$ & $0.046$ & $0.274$ \\ 
   & 40 & $0.010$ & $0.022$ & $0.087$ & $0.513$ \\ 
   & 80 & $0.022$ & $0.049$ & $0.188$ & $0.942$ \\ 
   & 160 & $0.076$ & $0.116$ & $0.432$ & $1.972$ \\ 
   \hline
\end{tabular}

  \end{table}
  \end{minipage}
\end{figure}


\begin{table}[tb]
\small
  \caption{Pearson divergence: absolute approximation error.}
  \label{tab:kernel-error}
  \centering
  \begin{tabular}{rrrrrr}
  \hline
 & k & $n=500$ & 1000 & 2000 & 5000 \\ 
  \hline
\multirow{4}{*}{\rotatebox{90}{Proposed}}  & 20 & $0.0027\pm0.0028$ & $0.0012\pm0.0012$ & $0.0021\pm0.0019$ & $0.0016\pm0.0022$ \\ 
   & 40 & $0.0018\pm0.0023$ & $0.0006\pm0.0007$ & $0.0012\pm0.0011$ & $0.0011\pm0.0020$ \\ 
   & 80 & $0.0007\pm0.0008$ & $0.0004\pm0.0003$ & $0.0008\pm0.0008$ & $0.0007\pm0.0017$ \\ 
   & 160 & $0.0003\pm0.0003$ & $0.0002\pm0.0001$ & $0.0003\pm0.0003$ & $0.0002\pm0.0003$ \\ 
  \hline
\multirow{4}{*}{\rotatebox{90}{Nystr\"{o}m}} & 20 & $0.3685\pm0.9142$ & $1.3006\pm2.4504$ & $3.1119\pm6.1464$ & $0.6989\pm0.9644$ \\ 
   & 40 & $0.3549\pm0.6191$ & $0.4207\pm0.7018$ & $0.9838\pm1.5422$ & $0.3744\pm0.6655$ \\ 
   & 80 & $0.0184\pm0.0192$ & $0.0398\pm0.0472$ & $0.2056\pm0.2725$ & $0.5705\pm0.7918$ \\ 
   & 160 & $0.0143\pm0.0209$ & $0.0348\pm0.0541$ & $0.0585\pm0.1112$ & $0.0254\pm0.0285$ \\ 
   \hline
\end{tabular}

\end{table}


\paragraph{Numerical simulation}

We investigated the actual relationships between $n$, $k$, and
$\epsilon$. To this end, we prepared synthetic data as follows.
We randomly generated inputs as $A_{ij}\sim U_{[-1,1]}$,
$d_i\sim U_{[0,1]}$, and $b_i\sim U_{[-1,1]}$ for $i,j\in [n]$, where
$U_{[a,b]}$ denotes the uniform distribution with the support
$[a,b]$. After that, we solved~\eqref{eq:the-problem} by using
Algorithm~\ref{alg:quadratic-form} and compared it with the exact
solution obtained by QP.\footnote{We used GLPK
  (\url{https://www.gnu.org/software/glpk/}) for the QP solver.}
The result (Figure~\ref{fig:toy}) show the approximation errors were
evenly controlled regardless of $n$, which meets the error analysis
(Theorem~\ref{the:error}).

\paragraph{Application to kernel methods}

Next, we considered the kernel approximation of the Pearson
divergence~\cite{Yamada:2011}. The problem is defined as follows.
Suppose we have the two different data sets
$\bx=(x_1,\dots,x_{n})\in\bbR^n$ and
$\bx'=(x'_{1},\dots,x'_{n'})\in\bbR^{n'}$ where $n,n'\in\bbN$. Let
$H\in\bbR^{n\times n}$ be a gram matrix such that $H_{l,m} = \frac{\alpha}{n}\sum_{i=1}^n \phi(x_{i},x_{l})\phi(x_{i},x_{m}) +
\frac{1-\alpha}{n'}\sum_{j=1}^{n'}
\phi(x'_{j},x_{l})\phi(x'_{j},x_{m})$,
where $\phi(\cdot,\cdot)$ is a kernel function and $\alpha\in(0,1)$ is
a parameter. Also, let $\bh\in\bbR^{n}$ be a vector such that $h_l = \frac{1}{n}\sum_{i=1}^n \phi(x_{i},x_{l})$.
Then, an estimator of the $\alpha$-relative Pearson divergence between
the distributions of $\bx$ and $\bx'$ is obtained by
$-\frac{1}{2}-\min_{\bv \in \bbR^n}\frac{1}{2}\bracket{\bv}{H \bv} -
\bracket{\bh}{\bv} + \frac{\lambda}{2}\bracket{\bv}{\bv}.$
Here, $\lambda>0$ is a regularization parameter. In this experiment,
we used the Gaussian kernel $\phi(x,y)=\exp((x-y)^2/2\sigma^2)$ and
set $n'=200$ and $\alpha=0.5$; $\sigma^2$ and $\lambda$ were chosen by 5-fold
cross-validation as suggested in~\cite{Yamada:2011}. We randomly
generated the data sets as $x_i\sim N(1,0.5)$ for $i\in[n]$ and
$x'_j\sim N(1.5,0.5)$ for $j\in[n']$ where $N(\mu,\sigma^2)$ denotes
the Gaussian distribution with mean $\mu$ and variance $\sigma^2$.

We encoded this problem into \eqref{eq:the-problem} by setting
$A=\frac{1}{2}H$, $\bb=-\bh$, and $\bd=\frac{\lambda}{2n}\mathbf{1}_n$,
where $\mathbf{1}_n$ denotes the $n$-dimensional vector whose elements
are all one. After that, given $k$, we computed the second step of
Algorithm~\ref{alg:quadratic-form} with the pseudoinverse of
$A|_S+k\diag(\bd|_S)$.
Absolute approximation errors and runtimes were compared with
Nystr\"{o}m's method whose approximated rank was set to $k$. In terms
of accuracy, our method clearly outperformed Nystr\"{o}m's method
(Table~\ref{tab:kernel-error}). In addition, the runtimes of our
method were nearly constant, whereas the runtimes of Nystr\"{o}m's
method grew linearly in $k$ (Table~\ref{tab:kernel-time}).



\section{Acknowledgments}
We would like to thank Makoto Yamada for suggesting a motivating problem of our method.
K.~H. is supported by MEXT KAKENHI 15K16055.
Y.~Y.\ is supported by MEXT Grant-in-Aid for Scientific Research on Innovative Areas (No.~24106001), JST, CREST, Foundations of Innovative Algorithms for Big Data, and JST, ERATO, Kawarabayashi Large Graph Project.

\bibliographystyle{abbrv}
{
  \small
  \bibliography{main}
}

\end{document}